\begin{document}

\title{Ising-based Consensus Clustering \\ on Specialized Hardware}
%
%\titlerunning{Abbreviated paper title}
% If the paper title is too long for the running head, you can set
% an abbreviated paper title here
%
\author{Eldan Cohen\inst{1}\thanks{Work done while at Fujitsu Laboratories of America.} \and
Avradip Mandal\inst{2} \and
Hayato Ushijima-Mwesigwa\inst{2} \and \\
Arnab Roy\inst{2}}
\authorrunning{E. Cohen et al.}
% First names are abbreviated in the running head.
% If there are more than two authors, 'et al.' is used.
%
\institute{University of Toronto, Toronto, Canada\\
\email{ecohen@mie.utoronto.ca} \and
Fujitsu Laboratories of America, Inc. USA\\
\email{\{amandal,hayato,aroy\}@us.fujitsu.com}}

\maketitle              % typeset the header of the contribution
\begin{abstract}
The emergence of specialized optimization hardware such as CMOS annealers and adiabatic quantum computers carries the promise of solving hard combinatorial optimization problems more efficiently in hardware. Recent work has focused on formulating different combinatorial optimization problems as Ising models, the core mathematical abstraction used by a large number of these hardware platforms, and evaluating the performance of these models when solved on specialized hardware. An interesting area of application is data mining, where combinatorial optimization problems underlie many core tasks. In this work, we focus on consensus clustering (clustering aggregation), an important combinatorial problem that has received much attention over the last two decades. We present two Ising models for consensus clustering and evaluate them using the Fujitsu Digital Annealer, a quantum-inspired CMOS annealer. Our empirical evaluation shows that our approach outperforms existing techniques and is a promising direction for future research.

%\keywords{Consensus Clustering \and Ising model \and Specialized Hardware.}
\end{abstract}
\setlength{\tabcolsep}{0.25em}

\section{Introduction}
The increasingly challenging task of scaling the traditional Central Processing Unit (CPU) has lead to the exploration of new computational platforms such as quantum computers, CMOS annealers, neuromorphic computers, and so on (see~\cite{coffrin2019evaluating} for a detailed exposition). Although their physical implementations differ significantly, adiabatic quantum computers, CMOS annealers, memristive circuits, and optical parametric oscillators all share Ising models as their core mathematical abstraction \cite{coffrin2019evaluating}. This has lead to a growing interest in the formulation of computational problems as Ising models and in the empirical evaluation of these models on such novel computational platforms. This body of literature includes clustering and community detection \cite{kumar2018quantum,negre2019detecting,shaydulin2019network}, 
graph partitioning \cite{ushijima2017graph,ushijima2019multilevel}, and many NP-Complete problems such as covering, packing, and coloring \cite{lucas2014ising,liu2019modeling}.

Consensus clustering is the problem of combining multiple `base clusterings' of the same set of data points into a single consolidated clustering \cite{ghosh2011cluster}. Consensus clustering is used to generate robust, stable, and more accurate clustering results compared to a single clustering approach \cite{ghosh2011cluster}. The problem of consensus clustering has received significant attention over the last two decades \cite{ghosh2011cluster}, and was previously considered under different names (clustering aggregation, cluster ensembles, clustering combination) \cite{gionis2007clustering}. It has applications in different fields including data mining, pattern recognition, and bioinformatics \cite{gionis2007clustering} and a number of algorithmic approaches have been used to solve this problem.
The consensus clustering is, in essence, a combinatorial optimization problem \cite{wu2014k} and different instances of the problem have been proven to be NP-hard (e.g., \cite{filkov2004integrating,topchy2005clustering}). 

In this work, we investigate the use of special purpose hardware to solve the problem of consensus clustering. To this end, we formulate the problem of consensus clustering using Ising models and evaluate our approach on a specialized CMOS annealer. We make the following contributions:
\begin{enumerate}
    \item We present and study two Ising models for consensus clustering that can be solved on a variety of special purpose hardware platforms.
    \item We demonstrate how our models are embedded on the Fujitsu Digital Annealer (DA), a quantum-inspired specialized CMOS hardware.
    \item We present an empirical evaluation based on seven benchmark datasets and show     our approach outperforms existing techniques for consensus clustering.
\end{enumerate}

\section{Background}
\subsection{Problem Definition}\label{sec:prob_def}
Let $X=\{x_1, ..., x_n\}$ be a set of $n$ data points. A \emph{clustering} of $X$ is a process that partitions $X$ into subsets, referred to as \emph{clusters}, that together cover $X$. A clustering is represented by the mapping $\pi: X \to \{1, \dots, k_{\pi}\}$ where $k_{\pi}$ is the number of clusters produced by clustering $\pi$.  
Given $X$ and a set $\Pi = \{\pi_1, \dots, \pi_m\}$ of $m$ clusterings of the points in $X$, the \emph{Consensus Clustering Problem} is to find a new clustering, $\pi^*$, of the data $X$ that best summarizes the set of clusterings $\Pi$. The new clustering $\pi^*$ is referred to as the \emph{consensus} clustering.

Due to the ambiguity in the definition of an optimal consensus clustering, several approaches have been proposed to measure the solution quality of consensus clustering algorithms \cite{ghosh2011cluster}. In this work, we focus on the approach of determining a consensus clustering that agrees the most with the original clusterings. As an objective measure to determine this agreement, we use the mean Adjusted Rand Index (ARI) metric (Equation \ref{eq:meanARI}). However, we also consider clustering quality measured by mean Silhouette Coefficient \cite{rousseeuw1987silhouettes} and clustering accuracy based on true labels. In Section \ref{sec:emp-evaluation} these evaluation criteria are discussed in more details.

\subsection{Existing Criteria and Methods}\label{sec:existing_approaches}
Various criteria or objectives have been proposed for the Consensus Clustering Problem. In this work we mainly focus on two well-studied criteria, one based on the pairwise similarity of the data points, and the other based on the different assignments of the base clusterings. Other well-known criteria and objectives for the Consensus Clustering Problem can be found in the excellent surveys of  \cite{ghosh2011cluster,vega2011survey}, with most defining NP-Hard optimization problems.

\paragraph{Pairwise Similarity Approaches:} In this approach, a similarity matrix $S$ is constructed such that each entry in $S$ represents the fraction of clusterings in which two data points belong to the same cluster \cite{nguyen2007consensus}. In particular, 
\begin{equation}
    S_{uv} = \frac{1}{m}\sum_{i=1}^m \mathbbm{1}(\pi_i(u) = \pi_i(v)),\label{eq:s_ij}
\end{equation}
with $ \mathbbm{1}$ being the indicator function. The value $S_{uv}$ lies between 0 and 1, and is equal to 1 if all the base clusterings assign points $u$ and $v$ to the same cluster. Once the pairwise similarity matrix is constructed, one can use any similarity-based clustering algorithm on $S$ to find a consensus clustering with a fixed number of clusters, $K$. For example, \cite{li2010combining} proposed to find a consensus clustering $\pi^*$ with exactly $K$ clusters that minimizes the within-cluster dissimilarity: 
\begin{equation}
    \min \sum_{\substack{u, v \in X: \\ \pi^*(u) = \pi^*(v)}} (1 - S_{uv}).\label{eq:sim_based}
\end{equation}

\paragraph{Partition Difference Approaches: }
An alternative formulation is based on the different assignments between clustering. Consider two data points $u, v \in X$, and two clusterings $\pi_i, \pi_j \in \Pi$. The following binary indicator tests if $\pi_i$ and $\pi_j$ disagree on the clustering of $u$ and $v$:
\begin{equation}
    d_{u,v}(\pi_i, \pi_j) = \begin{cases}
    1,& \text{if } \pi_i(u) = \pi_i(v) \text{ and } \pi_j(u) \neq \pi_j(v)\\
    1,& \text{if } \pi_i(u) \neq \pi_i(v) \text{ and } \pi_j(u) = \pi_j(v)\\
    0,& \text{otherwise}.
\end{cases}
\end{equation}
The distance between two clusterings is then defined based on the number of pairwise disagreements:
\begin{equation}
    d(\pi_i, \pi_j) = \frac{1}{2}\sum_{u, v \in X } d_{u,v}(\pi_i, \pi_j)
\end{equation}
with the $\frac{1}{2}$ factor to take care of double counting and can be ignored.
This measure is defined as the number of pairs of points that are in the same cluster in one clustering and in different clusters in the other, essentially considering the (unadjusted) Rand index \cite{ghosh2011cluster}. Given this measure, a common objective is to find a consensus clustering $\pi^*$ with respect to the following optimization problem: 
\begin{equation}
    \min \sum_{i=1}^m d(\pi_i, \pi^*).\label{eq:second_obj}
\end{equation}

\paragraph{Methods and Algorithms:} The two different criteria given above define fundamentally different optimization problems, thus different algorithms have been proposed. One key difference between the two approaches inherently lies in determining the number of clusters $k_{\pi^*}$ in $\pi^*$. The pairwise similarity approaches (e.g., Equation (\ref{eq:sim_based})) require an input parameter $K$ that fixes the number of clusters in $\pi^*$,  whereas the partition difference approaches such as Equation (\ref{eq:second_obj}) do not have this requirement and determining $k_{\pi^*}$ is part of the objective of the problem. Therefore, for example, Equation (\ref{eq:sim_based}) will have a minimum value in the case when $k_{\pi^*}=n$, however this does not hold for Equation (\ref{eq:second_obj}).

The Cluster-based Similarity Partitioning Algorithm (CSPA) is proposed in \cite{strehl2002cluster} for solving the pairwise similarity based approach. The CSPA constructs a similarity-based graph with each edge having a weight proportional to the similarity given by $S$. Determining the consensus clustering with exactly $K$ clusters is treated as a $K$-way graph partitioning problem, which is solved by methods such as METIS  \cite{karypis1998multilevelk}. In \cite{nguyen2007consensus}, the authors experiment with different clustering algorithms including hierarchical agglomerative clustering (HAC) and iterative techniques that start from an initial partition and iteratively reassign points to clusters based on their pairwise similarities.
For the partition difference approach, Li et al. \cite{li2007solving} proposed to solve Equation (\ref{eq:second_obj}) using nonnegative matrix factorization (NMF). Gionis et al. \cite{gionis2007clustering} proposed several algorithms that make use of the connection between Equation (\ref{eq:second_obj}) and the problem of correlation clustering. CSPA, HAC, NMF: these three approaches are considered as baseline in our empirical evaluation section (Section  \ref{sec:emp-evaluation}).

\subsection{Ising Models}
Ising models are graphical models that include a set of nodes representing
spin variables and a set of edges corresponding to the interactions between the spins. The energy level of an Ising model which we aim to minimize is given by: 
\begin{equation}
    E(\sigma) = \sum_{(i,j) \in \mathcal{E}} J_{i,j} \sigma_i\sigma_j + \sum_{i \in \mathcal{N}} h_i \sigma_i, 
 \end{equation}
where the variables $\sigma_i \in \{-1,1\}$ are the spin variables and the couplers, $J_{i,j}$, represent the interaction between the spins.

A Quadratic Unconstrained Binary Optimization (QUBO) model includes binary variables $q_i \in \{0,1\}$ and couplers, $c_{i,j}$. The objective to minimize is: 
\begin{equation}
 E(\textbf{q}) = \sum_{i = 1} ^ n c_iq_i + \sum_{i<j} c_{i,j}q_{i}q_{j}.
 \end{equation} 

QUBO models can be transformed to Ising models by setting $\sigma_i = 2q_i-1$~\cite{bian2010ising}.

\section{Ising Approach for Consensus Clustering on Specialized Hardware}\label{sec:approach}
In this section, we present our approach for solving consensus clustering on specialized hardware using Ising models. We present two Ising models that correspond to the two approaches in Section \ref{sec:existing_approaches}. We then demonstrate how they can be solved on the Fujitsu Digital Annealer (DA), a specialized CMOS hardware. 

\subsection{Pairwise Similarity-based Ising Model}
For each data point $u \in X$, let $q_{uc} \in \{0, 1\}$ be the binary variable such that $q_{uc} = 1$ if $\pi^*$ assigns $u$ to cluster $c$, and 0 otherwise. Then the constraints
\begin{equation}
  \sum_{c=1}^{K}q_{uc} = 1, \quad \text{for each } u \in X
  \label{eq:one_hot}
\end{equation}
ensure $\pi^*$ assigns each point to exactly one cluster. Subject to the constraints (\ref{eq:one_hot}), the sum of quadratic terms  $\sum_{c=1}^{K}  q_{uc} q_{vc} $ is 1 if $\pi^*$ assigns both $u, v \in X$ to the same cluster, and is $0$ if assigned to different clusters. Therefore the value
\begin{equation}
\sum_{\substack{u, v \in X: \\ \pi^*(u) = \pi^*(v)}} (1 - S_{uv}) = \sum_{u, v \in X} (1-S_{uv})
\sum_{c=1}^{K}  q_{uc} q_{vc}
\label{eq:obj}
\end{equation}
represents the sum of within-cluster dissimilarities in $\pi^*$: $(1-S_{uv})$ is the fraction of clusterings in $\Pi$ that assign $u$ and $v$ to different clusters while $\pi^*$ assigns them to the same cluster. We therefore reformulate Equation (\ref{eq:sim_based}) as QUBO:
\begin{equation}
\begin{aligned}
\min \sum_{u, v \in X} (1-S_{uv})
\sum_{c=1}^{K}  q_{uc} q_{vc} + \sum_{u \in X} A (\sum_{c=1}^{K} q_{uc} -1)^2. \label{eq:ising_cluster} 
\end{aligned}
\end{equation}
where the term $ \sum_{u \in X} A (\sum_{c=1}^{K} q_{uc} -1)^2$ is added to the objective function to ensure that the constraints (\ref{eq:one_hot}) are satisfied. $A$ is positive constant that penalizes the objective for violations of constraints (\ref{eq:one_hot}). One can show that if $A \geq n$, the optimal solution of the QUBO in Equation (\ref{eq:ising_cluster}) does not violate the constraints (\ref{eq:one_hot}). The proof is very similar to proof of Theorem \ref{thm:thm1} and a similar result in \cite{kumar2018quantum}.

\subsection{Partition Difference Ising Model}
The partition difference approach essentially considers the (unadjusted) Rand Index \cite{ghosh2011cluster} and therefore can be expected to perform better. The \emph{Correlation Clustering Problem} is another important problem in data mining. Gionis et al. \cite{gionis2007clustering} showed that Equation (\ref{eq:second_obj}) is a restricted case of the Correlation Clustering Problem, and that Equation (\ref{eq:second_obj}) can be expressed as the following equivalent form of the Correlation Clustering Problem
\begin{equation}
    \min_{\pi^*} \ \sum_{\substack{u, v \in X: \\ \pi^*(u) = \pi^*(v)}} (1 - S_{uv}) + \sum_{\substack{u, v \in X: \\ \pi^*(u) \neq \pi^*(v)}} S_{uv}.\label{eq:corr_clustering}
\end{equation}
We take advantage of this equivalence to model Equation  (\ref{eq:second_obj}) as a QUBO. In a similar fashion to the QUBO formulated in the preceding subsection, the terms
\begin{equation}
\sum_{\substack{u, v \in X: \\ \pi^*(u) \neq \pi^*(v)}} S_{uv} =  \sum_{u, v \in X} S_{uv}
\sum_{1 \leq c \neq l \leq K} q_{uc} q_{vl}
\label{eq:obj_corr}
\end{equation}
measure the similarity between points in \emph{different} clusters, where $K$ represents an \textit{upper bound} for the number of clusters in $\pi^*$. This then leads to the minimizing the following QUBO:
\begin{equation}
\begin{aligned}
 \sum_{u, v \in X} (1-S_{uv})
\sum_{c=1}^{K}  q_{uc} q_{vc} + \sum_{u, v \in X} S_{uv}
\sum_{1 \leq c \neq l \leq K} 
q_{uc} q_{vl}+ \sum_{u \in X} B (\sum_{c=1}^{K} q_{uc} -1)^2.
\label{eq:ising_correlation} 
\end{aligned}
\end{equation}

Intuitively, Equation (\ref{eq:ising_correlation}) measures the disagreement between the consensus clustering and the clusterings in $\Pi$. This disagreement is due to points that are clustered together in the consensus clustering but not in the clusterings in $\Pi$, however it is also due to points that are assigned to different clusters in the consensus partition but in the same cluster in some of the partitions in $\Pi$. 

Formally, we can show that Equation (\ref{eq:ising_correlation}) is equivalent to the correlation clustering formulation in Equation (\ref{eq:corr_clustering}) when setting $B \ge n$. Consistent with other methods that optimize Equation (\ref{eq:second_obj}) (e.g., \cite{li2007solving}), our approach takes as an input $K$, an \textit{upper bound} on the number of clusters in $\pi^*$, however the obtained solution can use smaller number of clusters. In our proof, we assume $K$ is large enough to represent the optimal solution, i.e., greater than the number of clusters in optimal solutions to the correlation clustering problem in Equation (\ref{eq:corr_clustering}). 

\begin{theorem}
Let $\bar{\textbf{q}}$ be the optimal solution to the QUBO given by Equation (\ref{eq:ising_correlation}). 
If $B \ge n$, for a  large enough $K \leq n$,  an optimal solution to the Correlation Clustering Problem in Equation~(\ref{eq:corr_clustering}),  $\bar{\pi}$, can be efficiently evaluated from $\bar{\textbf{q}}$. \label{thm:thm1}
\end{theorem}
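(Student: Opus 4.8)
The plan is to establish the theorem in two stages: a \emph{feasibility} stage, showing that when $B \ge n$ every optimal solution $\bar{\mathbf q}$ of the QUBO~(\ref{eq:ising_correlation}) in fact satisfies the one-hot constraints~(\ref{eq:one_hot}) exactly, and an \emph{equivalence} stage, showing that on one-hot solutions the QUBO objective coincides with the correlation-clustering objective~(\ref{eq:corr_clustering}). Given both, the partition $\bar\pi$ that assigns each point $u$ to the unique cluster $c$ with $\bar q_{uc}=1$ can be read off from $\bar{\mathbf q}$ in $O(nK)$ time and must be optimal for~(\ref{eq:corr_clustering}).

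For the feasibility stage, take an optimal $\bar{\mathbf q}$ and suppose some point $w$ violates~(\ref{eq:one_hot}), say $t := \sum_c \bar q_{wc} \neq 1$. If $t \ge 2$, fix one cluster $c^\ast$ with $\bar q_{wc^\ast}=1$ and set $\bar q_{wc}=0$ for all other $c$. The penalty term for $w$ drops from $B(t-1)^2$ to $0$, a strict decrease of at least $B>0$, whereas -- using only $0 \le S_{uv} \le 1$, so that both $1-S_{uv}$ and $S_{uv}$ are nonnegative, together with elementary counting of index pairs -- neither the ``same-cluster'' sum $\sum_{u,v}(1-S_{uv})\sum_c q_{uc}q_{vc}$ nor the ``different-cluster'' sum $\sum_{u,v}S_{uv}\sum_{c\ne l}q_{uc}q_{vl}$ can increase, since collapsing $w$ onto a single cluster only deletes same-cluster pairs incident to $w$ and only deletes cross-cluster index pairs incident to $w$. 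This contradicts optimality. If instead $t=0$, move $w$ into a single cluster $c$: this removes its penalty of $B$ and adds a coupling cost of $\sum_{v : c_v = c}(1-S_{wv}) + \sum_{v : c_v \ne c} S_{wv}$, where $c_v$ denotes the cluster of $v$ and the sums range over the points $v \ne w$ that are currently assigned (unassigned points contribute nothing). Each summand lies in $[0,1]$ and there are at most $n-1$ such $v$, so the coupling cost is at most $n-1 < n \le B$ and the move strictly improves the objective -- again a contradiction. Hence $\bar{\mathbf q}$ satisfies~(\ref{eq:one_hot}).

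For the equivalence stage, note that for any one-hot assignment $\sum_c q_{uc}q_{vc} = \mathbbm{1}(\pi(u)=\pi(v))$ and $\sum_{c\ne l}q_{uc}q_{vl} = \mathbbm{1}(\pi(u)\ne\pi(v))$, so the first two terms of~(\ref{eq:ising_correlation}) equal the correlation-clustering objective~(\ref{eq:corr_clustering}) of the induced partition while the penalty vanishes; conversely, every clustering with at most $K$ clusters encodes a feasible one-hot solution of the same value. Therefore the minimum of the QUBO equals the minimum of~(\ref{eq:corr_clustering}) over partitions with at most $K$ clusters, and since $K$ is taken large enough to realize some optimal correlation clustering, this is the true optimum of~(\ref{eq:corr_clustering}); hence $\bar\pi$, extracted from $\bar{\mathbf q}$ as above, attains it. The index-pair bookkeeping in the $t\ge 2$ case and the equivalence computation are routine; the one step that actually fixes the constant, and which I expect to be the crux, is the $t=0$ case, where the bound of $n-1$ on the cost of inserting an unassigned point into its cluster is precisely what makes the threshold $B \ge n$ sufficient.
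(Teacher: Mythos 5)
Your proposal is correct and follows essentially the same route as the paper's proof: first showing that $B \ge n$ forces every optimal QUBO solution to satisfy the one-hot constraints (via the penalty-$B$-versus-at-most-$(n-1)$ coupling-cost comparison), and then using the exact correspondence between one-hot assignments with $K$ large enough and clusterings to conclude that the extracted partition is optimal for Equation~(\ref{eq:corr_clustering}). Your treatment is somewhat more explicit than the paper's sketch (separating the $t \ge 2$ and $t = 0$ cases and verifying the coupling terms cannot increase), but the underlying argument is the same.
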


\begin{proof} 
First we show the optimal solution to the QUBO in Equation (\ref{eq:ising_correlation}) satisfies the one-hot encoding (${\sum_k q_{uk} = 1}$). This would imply given $\bar{\textbf{q}}$ we can create a valid clustering  $\bar{\pi}$.
Note, the optimal solution will never have ${\sum_c q_{uc} > 1}$ as it can only increase the cost. The only case in which an optimal solution will have ${\sum_c q_{uc} < 1}$ is when the cost of assigning a point to a cluster is higher than the cost of not assigning it to a cluster (i.e., the penalty $B$). Assigning a point $u$ to a cluster will incur a cost of $(1 - S_{uv})$ for each point $v$ in the same cluster and $S_{uv}$ for each point $v$ that is not in the cluster. As there is additional $n-1$ points in total, and both $(1 - S_{uv})$ and $S_{uv}$ are less or equal to one (Equation (\ref{eq:s_ij})), setting $B\ge n$ guarantees the optimal solution satisfies the one-hot encoding. 

Now we assume that $\bar{\pi}$ is not optimal, i.e., there exists an optimal solution $\hat{\pi}$ to Equation (\ref{eq:corr_clustering}) that has a strictly lower cost than $\bar{\pi}$. Let $\hat{\textbf{q}}$ be the corresponding QUBO solution to $\hat{\pi}$, such that $\bar{\pi}(u) = k$ if and only if $\bar{q}_{uk} = 1$. This is possible because $K$ is large enough to accomodate all clusters in $\hat{\pi}$. As both $\bar{\textbf{q}}$ and $\hat{\textbf{q}}$ satisfy that one-hot encoding (penalty terms are zero), their cost is identical to the cost of $\bar{\pi}$ and $\hat{\pi}$ . 
Since the cost of $\hat{\pi}$ is strictly lower than $\bar{\pi}$, and the cost of $\bar{\textbf{q}}$ is lower or equal to $\hat{\textbf{q}}$, we have a contradiction. \qed
\end{proof}

\subsection{Solving Consensus Clustering on the Fujitsu Digital Annealer}\label{sec:da}
The Fujitsu Digital Annealer (DA) is a recent CMOS hardware for solving combinatorial optimization problems formulated as QUBO \cite{aramon2019physics,daweb}. 
We use the second generation of the DA that is capable of representing problems with up to 8192 variables with up to 64 bits of precision. The DA has previously been used to solve problems in areas such as communication \cite{naghsh2019digitally} and signal processing \cite{rahman2019ising}. 

The DA algorithm \cite{aramon2019physics} is based on simulated annealing (SA) \cite{kirkpatrick1983optimization}, while taking advantage of the massive parallelization provided by the CMOS hardware \cite{aramon2019physics}. It has several key differences compared to SA, most notably a \textit{parallel-trial} scheme in which each MC step considers all possible one-bit flips in parallel and \textit{dynamic offset} mechanism that increase the energy of a state to escape local minima \cite{aramon2019physics}.

\subsubsection{Encoding Consensus Clustering on the DA}
When embedding our Ising models on the DA, we need to consider the hardware specification and adapt the representation of our model accordingly. Due to hardware precision limit, we need to embed the couplers and biases on an integer scale with limited granularity.
In our experiments, we normalize the pairwise costs $S_{uv}$ in the discrete range $[0, 100]$, $D_{ij} = \left[{S_{uv}\cdot 100}\right]$, and accordingly $(1-S_{uv})$ is replaced by $(100 - D_{uv})$. Note that the theoretical 
bound $B=n$ is adjusted accordingly to be $B=100\cdot n$. 

The theoretical bound guarantees that all constraints are satisfied if problems are solved to optimality. In practice, the DA does not necessarily solve problems to optimality and due to the nature of annealing-based algorithms, using very high weights for constraints is likely to create deep local minima and result in solutions that may satisfy the constraints but are often of low-quality. This is especially relevant to our pairwise similarity model where the bound tends to become loose as the number of clusters grows. In our experiments, we use constant, reasonably high, weights that were empirically found to perform well across datasets. For the pairwise similarity-based model (Equation (\ref{eq:ising_cluster})) we use $A = 2^{14}$, and for the partition difference  model (Equation (\ref{eq:ising_correlation})) we use $B= 2^{15}$. While we expect to get better performance by tuning the weights per-dataset, our goal is to demonstrate the performance of our approach in a general setting. Automatic tuning of the weight values for the DA is a direction for future work.

Unlike many of the existing consensus clustering algorithms that run until convergence, our method runs for a given time limit (defined by the number of runs and iterations) and returns the best solution encountered. In our experiments, we arbitrarily choose \emph{three seconds} as a (reasonably short) time limit to solve our Ising models. As with the weights, we employ a single temperature schedule across all datasets, and \emph{do not} tune it per dataset.

\section{Empirical Evaluation}
\label{sec:emp-evaluation}
We perform an extensive empirical evaluation of our approach using a set of seven benchmark datasets. We first describe how we generate the set of clusterings, $\Pi$. Next, we describe the baselines, the evaluation metrics, and the datasets. 
\subsubsection{Generating Partitions} We follow \cite{fred2005combining} and generate a set of clusterings by randomizing the parameters of the K-Means algorithm, namely the number of clusters $K$ and the initial cluster centers. In this work, we only use labelled datasets for which we know the number of clusters, $\widetilde{K}$, based on the true labels. To generate the base clusterings we run the K-Means algorithm with random cluster centers and we randomly choose $K$ from the range $[2, 3\widetilde{K}]$. For each dataset, we generate 100 clusterings to serve as the clustering set $\Pi$.

\subsubsection{Baseline Algorithms} 
We compare our pairwise similarity-based Ising model, referred to as DA-Sm,  and our correlation clustering Ising model, referred to as DA-Cr, to three popular algorithms for consensus clustering:
\begin{enumerate}
    \item The cluster-based similarity partitioning algorithm (CSPA) \cite{strehl2002cluster} solved as a $K$-way graph partitioning problem using METIS \cite{karypis1998multilevelk}.
    \item The nonnegative matrix factorization (NMF) formulation in \cite{li2007solving}.
    \item Hierarchical agglomerative clustering (HAC) starts with all points in singleton clusters and repeatedly merges the two clusters with the largest average similarity based on $S$, until reaching the desired number of clusters \cite{nguyen2007consensus}.
\end{enumerate}
\subsubsection{Evaluation} We evaluate the different methods using three measures. 
Our main concern in this work is the level of agreement between the consensus clustering and the set of input clusterings. To this end, one requires a metric measuring the similarity of two clusterings that can be used to measure how close the consensus clustering $\pi^*$ to each base clustering $\pi_i \in \Pi$ is. One of popularly used metrics to measure the similarity between two clusterings is the Rand Index (RI) and Adjusted Rand Index (ARI)~\cite{hubert1985comparing}. The Rand Index of two clustering lies between 0 and 1, obtaining the value 1 when both clusterings perfectly agree. Likewise, the maximum score of ARI, which is corrected-for-chance version of RI, is achieved when both clusterings perfectly agree. $ARI(\pi_i, \pi^*)$ can be viewed as measure of \emph{agreement} between the consensus clustering $\pi^*$ and some base clusterings $\pi_i \in \Pi$. We use the mean ARI as the main evaluation criteria:
\begin{equation}
    \label{eq:meanARI}
    \frac{1}{m}\sum_{i=1}^m ARI(\pi_i, \pi^*)
\end{equation}

We also evaluate $\pi^*$ based on clustering quality and accuracy. For clustering quality, we use the  mean Silhouette Coefficient \cite{rousseeuw1987silhouettes} of all data points (computed using the Euclidean distance between the data points). For clustering accuracy, we compute the ARI between the consensus partition $\pi^*$ and the true labels.

\subsubsection{Benchmark Datasets}
We run experiments on seven datasets with different characteristics: \textit{Iris, Optdigits, Pendigits, Seeds, Wine} from the UCI repository~\cite{Dua:2019} as well as \textit{Protein} \cite{xing2003distance} and \textit{MNIST}.\footnote{http://yann.lecun.com/exdb/mnist/} \textit{Optdigits-389} is a randomly sampled subset of Optdigits containing only the digits $\{3,8,9\}$. Similarly, \textit{MNIST-3689} and \textit{Pendigits-149} are subsets of the MNIST and Pendigits datasets.

Table \ref{tab:datasets} provides statistics on each of the data set, with the coefficient of variation (CV) \cite{degroot2012probability} describing the degree of class imbalance: 
zero indicates perfectly balanced classes, while higher values indicate higher degree of class imbalance.

\begin{table}[h!]
\centering
\caption{Datasets}\label{tab:datasets}
\begin{tabular}{lcccc}
\toprule
Dataset & \# Instances & \# Features & \# Clusters & CV \\ \midrule
Iris & 150 & 4 & 3 & 0.000 \\
MNIST-3689 & 389 & 784 & 4 & 0.015 \\
Optdigits-389 & 537 & 64 & 3 & 0.021 \\
Pendigits-149 & 532 & 16 & 3 & 0.059 \\
Protein & 116 & 20 & 6 & 0.301 \\
Seeds & 210 & 7 & 3 & 0.000 \\
Wine & 178 & 13 & 3 & 0.158 \\
\bottomrule
\end{tabular}
\end{table}

\subsection{Results}
We compare the baseline algorithms to the two Ising models in Section \ref{sec:approach} solved using the Fujitsu Digital Annealer described in Section \ref{sec:da}.

Clustering is typically an unsupervised task and the number of clusters is unknown. The number of clusters in the true labels, $\widetilde{K}$, is not available in real scenarios. Furthermore, $\widetilde{K}$ is not necessarily the best value for clustering tasks (e.g., in many cases it is better to have smaller clusters that are more pure). We therefore test the algorithms in two configurations: when the number of clusters is set to  $\widetilde{K}$, as in the true labels, and when the number of clusters is set to $2\widetilde{K}$.

\begin{table}[b]
\centering
\caption{Consensus Performance Measured by Mean ARI Across Partitions}\label{tab:results_consensus}
\begin{tabular}{l|ccccc|ccccc}
\toprule
 & \multicolumn{5}{c|}{$\widetilde{K}$ clusters} & \multicolumn{5}{c}{$2\widetilde{K}$ clusters} \\
Dataset & CSPA & NMF & HAC & DA-Sm & DA-Cr & CSPA & NMF & HAC & DA-Sm & DA-Cr \\ \midrule
Iris & 0.555 & \textbf{0.618} & \textbf{0.618} & \textbf{0.619} & \textbf{0.621} & 0.536 & 0.614 & 0.627 & 0.608 & \textbf{0.642} \\
MNIST & 0.459 & 0.449 & 0.469 & \textbf{0.474} & \textbf{0.474} & 0.456 & 0.511 & 0.517 & 0.490 & \textbf{0.521} \\
Optdig. & 0.528 & \textbf{0.550} & 0.541 & \textbf{0.550} & \textbf{0.551} & 0.492 & 0.596 & 0.608 & 0.576 & \textbf{0.612} \\
Pendig. & 0.546 & 0.546 & 0.507 & \textbf{0.555} & \textbf{0.555} & 0.531 & 0.629 & \textbf{0.642} & 0.605 & \textbf{0.644} \\
Protein & 0.344 & 0.393 & 0.379 & 0.390 & \textbf{0.405} & 0.324 & 0.419 & \textbf{0.423} & 0.378 & 0.415 \\
Seeds & 0.558 & \textbf{0.577} & 0.534 & \textbf{0.575} & \textbf{0.577} & 0.484 & 0.602 & 0.602 & 0.580 & \textbf{0.612} \\
Wine & 0.481 & \textbf{0.536} & 0.535 & \textbf{0.537} & \textbf{0.538} & 0.502 & \textbf{0.641} & \textbf{0.641} & \textbf{0.641} & \textbf{0.643} \\ \midrule
\# Best & 0 & 4 & 1 & 6 & \textbf{7} & 0 & 1 & 3 & 1 & \textbf{6} \\
\bottomrule
\end{tabular}
\end{table}

\subsubsection{Consensus Criteria}
Table \ref{tab:results_consensus} shows the mean ARI between $\pi^*$ and the clusterings in $\Pi$. To avoid bias due to very minor differences, we consider all the methods that achieved Mean ARI that is within a threshold of 0.0025 from the best method to be equivalent and highlight them in bold. We also summarize the number of times each method was considered best across the different datasets.

The results show that DA-Cr is the best performing method for both $\widetilde{K}$ and $2\widetilde{K}$ clusters. The results of DA-Sm are not consistent: DA-Sm and NMF are performing well for $\widetilde{K}$ clusters and HAC is performing better for $2\widetilde{K}$ clusters.

\subsubsection{Clustering Quality} Table \ref{tab:results_silhouette} report the mean Silhouette Coefficient of all data points. Again, DA-Cr is the best performing method across datasets, followed by HAC. NMF seems to be equivalent to HAC for $2\widetilde{K}$.

\begin{table}[h!]
\centering
\caption{Clustering Quality Measured by Silhouette}\label{tab:results_silhouette}
\begin{tabular}{l|ccccc|ccccc}
\toprule
 & \multicolumn{5}{c|}{$\widetilde{K}$ clusters} & \multicolumn{5}{c}{$2\widetilde{K}$ clusters} \\
Dataset & CSPA & NMF & HAC & DA-Sm & DA-Cr & CSPA & NMF & HAC & DA-Sm & DA-Cr \\ \midrule
Iris & 0.519 & \textbf{0.555} & \textbf{0.555} & 0.551 & \textbf{0.553} & 0.289 & 0.366 & \textbf{0.371} & 0.343 & \textbf{0.373} \\
MNIST & 0.075 & 0.072 & \textbf{0.078} & \textbf{0.079} & \textbf{0.078} & 0.069 & \textbf{0.082} & 0.074 & 0.074 & \textbf{0.082} \\
Optdig. & 0.127 & 0.120 & 0.120 & \textbf{0.130} & \textbf{0.130} & 0.088 & \textbf{0.119} & \textbf{0.119} & 0.112 & \textbf{0.121} \\
Pendig. & 0.307 & 0.307 & \textbf{0.315} & 0.310 & 0.310 & 0.305 & 0.332 & \textbf{0.375} & 0.368 & 0.364 \\
Protein & 0.074 & \textbf{0.106} & 0.095 & 0.094 & \textbf{0.104} & 0.068 & 0.111 & 0.115 & \textbf{0.119} & \textbf{0.118} \\
Seeds & 0.461 & 0.468 & 0.410 & 0.469 & \textbf{0.472} & 0.275 & \textbf{0.343} & 0.304 & \textbf{0.344} & 0.302 \\
Wine & 0.453 & 0.542 & \textbf{0.571} & 0.547 & 0.545 & 0.452 & \textbf{0.543} & \textbf{0.541} & 0.539 & \textbf{0.542} \\ \midrule
\# Best & 0 & 2 & 4 & 2 & \textbf{5} & 0 & 4 & 4 & 2 & \textbf{5} \\
\bottomrule
\end{tabular}
\end{table}

\subsubsection{Clustering Accuracy} Table \ref{tab:results_accuracy} shows the clustering accuracy measured by the ARI between $\pi^*$ and the true labels. For $\widetilde{K}$, we find DA-Sm to be best-performing solution (followed by DA-Cr). For $2\widetilde{K}$, DA-Cr outperforms the other methods. Interestingly, there is no clear winner between CSPA, NMF, and HAC.

\begin{table}[h!]
\centering
\caption{Clustering Accuracy Measured by ARI Compared to True Labels}\label{tab:results_accuracy}
\begin{tabular}{l|ccccc|ccccc}
\toprule
 & \multicolumn{5}{c|}{$\widetilde{K}$ clusters} & \multicolumn{5}{c}{$2\widetilde{K}$ clusters} \\
Dataset & CSPA & NMF & HAC & DA-Sm & DA-Cr & CSPA & NMF & HAC & DA-Sm & DA-Cr \\ \midrule
Iris & \textbf{0.868} & 0.746 & 0.746 & 0.716 & 0.730 & 0.438 & 0.463 & 0.447 & 0.433 & \textbf{0.521} \\
MNIST & 0.684 & 0.518 & 0.704 & \textbf{0.730} & 0.720 & 0.412 & 0.484 & \textbf{0.545} & 0.440 & 0.484 \\
Optdig. & 0.712 & 0.642 & 0.675 & 0.734 & \textbf{0.738} & 0.380 & 0.513 & \textbf{0.630} & 0.481 & 0.623 \\
Pendig. & 0.674 & \textbf{0.679} & 0.499 & 0.668 & 0.668 & 0.398 & 0.614 & 0.625 & 0.490 & \textbf{0.639} \\
Protein & 0.365 & 0.298 & 0.363 & 0.349 & \textbf{0.376} & 0.237 & 0.332 & 0.301 & 0.308 & \textbf{0.345} \\
Seeds & 0.705 & 0.710 & 0.704 & \textbf{0.764} & 0.717 & 0.424 & 0.583 & 0.573 & 0.500 & \textbf{0.619} \\
Wine & 0.324 & 0.395 & 0.371 & \textbf{0.402} & 0.398 & 0.231 & 0.245 & 0.240 & \textbf{0.248} & 0.238 \\ \midrule
\# Best & 1 & 1 & 0 & \textbf{3} & 2 & 0 & 0 & 2 & 1 & \textbf{4} \\
\bottomrule
\end{tabular}
\end{table}

\subsubsection{Experiments with higher $K$}\label{sec:larger_k} In partition difference approaches, increasing $K$ does not necessarily lead to a $\pi^*$ that has more clusters. Instead, $K$ serves as an upper bound and new clusters will be used in case they reduce the objective. 

To demonstrate how different algorithms handle different $K$ values, Table \ref{tab:iris_k} shows the consensus criteria and the actual number of clusters in $\pi^*$ for different values of $K$ (note that $\widetilde{K}=3$ in Iris). The results show that the performance of the pairwise similarity methods (CSPA, HAC, DA-Sm) degrades as we increase $K$. This is associated with the fact the actual number of clusters in $\pi^*$ is equal to $K$ which is significantly higher compared to the clusterings in $\Pi$. Methods based on partition difference (NMF and DA-Cr) do not exhibit significant degradation and 
the actual number of clusters does not grow beyond 5 for DA-Cr and 6 for NMF. Note that the average number of clusters in $\Pi$ is $5.26$.

\begin{table}[h!]
\centering
\caption{Results for Iris dataset with different number of clusters}\label{tab:iris_k}
\begin{tabular}{c|ccccc|ccccc}
\toprule
 & \multicolumn{5}{c|}{Consensus Criteria} & \multicolumn{5}{c}{\# of clusters in consensus clustering} \\
$K$ & CSPA & NMF & HAC & DA-Sm & DA-Cr & CSPA & NMF & HAC & DA-Sm & DA-Cr \\ \midrule
$3$ & 0.555 & \textbf{0.618} & \textbf{0.618} & \textbf{0.619} & \textbf{0.621} & 3 & 3 & 3 & 3 & 3 \\
$6$ & 0.536 & 0.614 & 0.627 & 0.608 & \textbf{0.642} & 6 & 6 & 6 & 6 & 5 \\
$9$ & 0.447 & 0.614 & 0.591 & 0.497 & \textbf{0.642} & 9 & 6 & 9 & 9 & 5 \\
$12$ & 0.370 & 0.614 & 0.507 & 0.414 & \textbf{0.642} & 12 & 6 & 12 & 12 & 5 \\
\bottomrule
\end{tabular}
\end{table}

\section{Conclusion}
Motivated by the recent emergence of specialized hardware platforms, we present a new approach to the consensus clustering problem that is based on Ising models and solved on the Fujitsu Digital Annealer, a specialized CMOS hardware. We perform an extensive empirical evaluation and show that our approach outperforms existing methods on a set of seven datasets. These results shows that using specialized hardware in core data mining tasks can be a promising research direction. As future work, we plan to investigate additional problems in data mining that can benefit from the use of specialized optimization hardware as well as experimenting with different types of specialized hardware platforms.

%
% ---- Bibliography ----
%
% BibTeX users should specify bibliography style 'splncs04'.
% References will then be sorted and formatted in the correct style.
%

%\bibliographystyle{splncs04}
%\bibliography{consensus}

\end{document}